\documentclass[11pt,a4paper]{article}
\usepackage[margin=24mm]{geometry}
\usepackage[T1]{fontenc}
\usepackage{lmodern}
\usepackage{amsmath,amssymb,amsthm}
\usepackage{graphicx,booktabs,longtable,array}
\usepackage[numbers,sort&compress]{natbib}
\usepackage{microtype}
\usepackage{caption}
\usepackage{float}
\usepackage[hidelinks]{hyperref}
\usepackage{adjustbox}
\usepackage{placeins}
\newtheorem{proposition}{Proposition}
\graphicspath{{figures/}}
\title{Power-of-Two Leaky Integrate-and-Fire Neurons for Event-Driven Vision--Language--Action Inference}
\author{Rui Liu \quad He Lv \quad Ziyao Zhang \quad Rui Ma\\Rongxiang Nie\thanks{Corresponding author: \texttt{nierongxiang@sivy.ltd}.}\\Sivy Limited, Hangzhou, China\\www.sivy.ltd}
\date{}
\begin{document}
\maketitle

\begin{abstract}

Vision--language--action (VLA) models have demonstrated strong capabilities in robotic manipulation, yet their reliance on dense, continuous-valued computation presents a fundamental challenge for energy-constrained deployment. Inspired by the sparse, event-driven signaling and temporal integration mechanisms of biological neurons, a promising alternative is to introduce spiking computation into VLA models. However, enabling such computation without restructuring pretrained policies or compromising closed-loop control remains an open challenge. To address this challenge, we introduce the Power-of-Two Leaky Integrate-and-Fire (PoT-LIF) neuron, which emits signed spikes with power-of-two amplitudes. By combining magnitude-coded spike events with residual-preserving membrane dynamics, PoT-LIF enables continuous information to be progressively represented through discrete spikes. The Heartbeat formulation further localizes this process within weighted projections, bridging biologically inspired spiking computation and pretrained VLA architectures without modifying their external inference interfaces. We integrate PoT-LIF into the vision projection, vision--language projection, and action projection of $\pi_{0.5}$ and evaluate both operating modes over 2,000 paired closed-loop LIBERO episodes per configuration. Standard-mode PoT-LIF VLA achieves success rates of 96.90\% and 96.35\% at $T=1$ and $T=2$, respectively, while Heartbeat mode achieves 97.25\% and 96.90\% at $T=2$ and $T=4$, compared with 96.70\% for the original continuous-activation $\pi_{0.5}$ baseline. At $T=1$ with a 60.01\% firing rate, PoT-LIF uses approximately $1/7.67$ of the dense-reference arithmetic energy, corresponding to a modeled reduction of 86.95\%.

\end{abstract}

\textbf{Keywords:} spiking neural networks, vision-language-action, robot learning, neuromorphic computing

\section{Introduction}\label{sec:1}

Vision--language--action (VLA) models couple pretrained vision--language backbones with action generation modules to enable closed-loop robotic manipulation in open environments~\citep{black2024pi0}. Systems such as $\pi_{0.5}$ \citep{physicalintelligence2025pi05}, Robotics Transformer 2 (RT-2) \citep{brohan2023rt2}, and OpenVLA \citep{kim2024openvla} have demonstrated strong cross-task generalization. However, VLA inference repeatedly executes dense weighted projections across vision, language, and action modules, resulting in high energy consumption on edge platforms. This motivates an alternative computational interface that preserves representational capacity while reducing arithmetic cost.

Spiking neural networks (SNNs) offer an event-driven computational paradigm~\citep{gerstner2014}. Leaky integrate-and-fire (LIF) neurons are widely adopted in SNNs. They convert continuous membrane potentials into sparse binary spikes, allowing synaptic computation to occur only when spikes are emitted.~\citep{lapicque1907,brunel2007lapicque}. However, a single binary spike does not explicitly encode the magnitude of the underlying membrane potential. To represent fine-grained continuous values, binary SNNs often accumulate information over multiple time steps, increasing computation and scheduling overhead. Recent studies, such as Spike-driven VLA and DS-VLA, have demonstrated the potential of SNNs for embodied control \citep{wang2026spikedriven,lyu2026dsvla}. Their integration strategies include directly trained spiking architectures and dendritic spiking action modules. However, these strategies can spread temporal processing across multiple modules, complicating integration into pretrained VLA architectures. These limitations motivate two key design goals for integrating spiking computation into pretrained VLA models: (1) representing continuous values with a limited number of events, and (2) confining temporal processing to selected projections without altering token layout, residual connections, or key-value (KV) cache organization.

To address the representation requirement, we draw inspiration from biological observations of exponential computation and graded signaling. Gabbiani et al. \citep{gabbiani2002} reported a nearly exponential mapping from membrane potential to firing rate in a locust visual neuron, consistent with multiplication through logarithmic input encoding and exponentiation by active membrane conductances. In cortical circuits, presynaptic membrane potential can also modulate axonal action potentials and the synaptic responses they evoke \citep{shu2006}. Motivated by these mechanisms, we use an exponent-based event representation: the amplitude takes the form ${b}^{e}$ where $b>1$ and $e$ is an integer. Successive exponent values specify multiplicative amplitude changes rather than equally spaced integer levels. We choose $b=2$ to align this representation with binary digital arithmetic. Residual retention then allows successive events to refine the represented value under a bounded emission budget.

We therefore propose a power-of-two leaky integrate-and-fire (PoT-LIF) neuron. A signed event has the form $s_t=\operatorname{sgn}(u_t)2^{e_t}$, where $\operatorname{sgn}(u_t)\in\{-1,+1\}$ and $e_t$ is an integer. For a floating-point weight $w$, an event contribution is evaluated through scaling--accumulation (SAC), formally defined in Sec.~\ref{sec:3.1}. For normal values whose scaled results remain normal and representable, scaling adjusts the weight exponent and combines the signs without a general significand multiplication. A residual membrane state retains the portion not represented by the current event, making the event budget $T$ an explicit control over precision and computational work.

The same PoT-LIF recurrence supports two operating modes: standard mode and Heartbeat. In standard mode, the neuron  receives successive inputs and retains its membrane state across input calls. Heartbeat instead applies the recurrence locally within each VLA projection: each projection invocation injects its input once and accumulates the weighted contributions of a bounded event sequence. The residual is retained across internal extraction steps, and the local state is reset at the end of the invocation. This confines the event-time dimension to individual projections while preserving the original token count, residual connections, and KV caches. Standard mode remains available for processing successive inputs and may support future integration with event-based sensors.

We evaluate PoT-LIF in the vision projection, vision--language projection, and action projection of $\pi_{0.5}$ using a fixed paired closed-loop protocol on LIBERO \citep{liu2023libero}.  Across both the main and supplementary evaluations, all tested PoT-LIF configurations of $\pi_{0.5}$ differ from the dense $\pi_{0.5}$ baseline by no more than 0.80 percentage points in success rate. We also estimate the arithmetic energy of PoT-LIF projection computation under a matched FP32 accumulation model, while excluding system-level overheads.

\newpage
This paper makes three contributions:
\nopagebreak[4]\begin{itemize}

\item \textbf{A unified PoT-LIF formulation with reconstruction guarantees.} We define a unified PoT-LIF recurrence with signed power-of-two emissions and residual retention, describe its standard mode, and formulate Heartbeat as a specific input schedule. We establish residual reconstruction guarantees under leak-free dynamics and exact arithmetic, with the geometric error bound requiring inactive exponent clipping (Sec.~\ref{sec:3}).

\item \textbf{PoT-LIF integration into $\pi_{0.5}$ with preserved closed-loop performance.} We integrate PoT-LIF into the vision projection, vision--language projection, and action projection of $\pi_{0.5}$ and evaluate standard mode followed by Heartbeat mode. Across both the main and supplementary evaluations, all tested PoT-LIF configurations differ from the 96.70\% dense $\pi_{0.5}$ baseline by no more than 0.80 percentage points in success rate (Sec.~\ref{sec:5}; paired statistical analyses are provided in Appendix~\ref{app:G}).

\item \textbf{Operation-level energy model showing a 7.67-fold arithmetic-energy advantage.} We formulate operation-level energy accounting for multiply--accumulate (MAC) and scaling--accumulation (SAC), the computation adapted to PoT-LIF and formally defined in Sec.~\ref{sec:3.1}. With matched FP32 accumulation and a 60.01\% firing rate at $T=1$, the modeled PoT-LIF arithmetic energy is approximately reach to 86.95\% reduction, excluding system-level overheads.

\end{itemize}

\section{Background and Related Work}\label{sec:2}

In this section, we first review the biological and computational foundations of LIF neurons, including membrane dynamics and surrogate-gradient training. We then survey related work on event representations, efficient arithmetic, and spiking VLA models.

\subsection{Spiking neural networks and LIF dynamics}\label{sec:2.1}

SNNs compute with discrete events rather than dense continuous-valued activations. A leaky integrate-and-fire (LIF) neuron integrates input into its membrane potential, exhibits passive leakage in the absence of input, emits a spike at threshold, and resets after emission (\citep{gerstner2014}, \cite{lapicque1907}, \cite{brunel2007lapicque}). The LIF dynamics can be written as
\nopagebreak[4]
\begin{equation}
\tau_m\frac{\mathrm{d}u}{\mathrm{d}t}=-u(t)+R_m I_{\mathrm{syn}}(t).
\label{eq:lif}
\end{equation}
Where $u(t)$ is the membrane potential, $\tau_m$ is the membrane time constant, $R_m$ is the membrane resistance, and $I_{\mathrm{syn}}(t)$ is the synaptic current.

At discrete step $t$, let $\theta>0$ be the firing threshold. The binary spike occurs when $u_{t} > \theta$, we denote the spike by adapting the Heaviside function
\nopagebreak[4]
\begin{equation}
	z(t) = H(u_t-\theta)=
	\begin{cases}
		1, & u_t\ge\theta,\\
		0, & u_t<\theta.
	\end{cases}
	\label{eq:binary}
\end{equation}

Assume that $r_t$ denotes the membrane state retained after emission and
reset. With zero resting potential, the hard and soft reset
rules are given, respectively, by
\nopagebreak[4]
\begin{equation}
	r_t=(1-z_t)u_t \quad \text{(hard reset)}, \qquad
	r_t=u_t-\theta z_t \quad \text{(soft reset)}.
\end{equation}

An exponentially decaying synaptic current $I_{\mathrm{syn}}(t)$
and the passive decay of the membrane potential in the absence
of input are described by the following expressions,
respectively \citep{destexhe1994,tsodyks1997}:
\nopagebreak[4]
\begin{equation}
	I_{\mathrm{syn}}(t)=A\exp(-t/\tau_s),\quad t\ge0,\qquad
	u(t+\Delta t)=\exp(-\Delta t/\tau_m)u(t),
	\label{eq:decay}
\end{equation}
here, $A$ is the synaptic current amplitude and $\tau_s$ and $\tau_m$ are the synaptic and membrane time constants. Discretizing this decay gives $\lambda=\exp(-\Delta t/\tau_m)$, where $\Delta t$ is the discrete time step.

A binary spike indicates whether a neuron fires; it does not explicitly encode the membrane-potential magnitude. Consequently, SNNs represent continuous quantities through rate coding, in which firing frequency carries the value, or temporal coding, in which spike timing carries the value \citep{gerstner2014,cao2015spiking,sengupta2019going}. Such representations generally require multiple time steps.

\subsection{Surrogate-gradient training}\label{sec:2.2}

Binary spike generation is modeled by the Heaviside step function, whose derivative is zero almost everywhere and undefined at the threshold. Consequently, direct backpropagation provides no useful gradients for learning. Surrogate-gradient methods enable gradient-based training through discontinuous spike-generation functions \citep{neftci2019surrogate}. The forward pass retains the binary spike, while the backward pass uses the derivative of a smooth substitute in place of the step-function derivative.

In this paper, PoT-LIF changes the event amplitude and residual retention; its discontinuous event-generation rule can still be trained using the same surrogate-gradient principle. Specifically, the forward pass retains the actual signed power-of-two events, while the backward pass approximates the derivatives of the discontinuous operations. The training configuration used in this work is described in Sec.~\ref{sec:5}.

\subsection{Event representations and efficient arithmetic}\label{sec:2.3}

Biological neurons exhibit a nearly exponential relationship between membrane potential and firing rate. Presynaptic membrane potential also modulates axonal action potentials and postsynaptic response amplitudes. These observations motivate introducing exponentially spaced firing levels and amplitude-bearing events into LIF dynamics. Theoretical analyses of neural coding under metabolic constraints \citep{levy1996} and experiments on energy-efficient synaptic transmission \citep{harris2015} further motivate considering the information conveyed by each event alongside its computational cost.

Logarithmic representations and DeepShift show how powers of two replace general multiplication with operations suited to binary hardware \citep{miyashita2016log,elhoushi2021deepshift}; AdderNet provides a complementary example of reducing reliance on multiplication through addition-based distance operations \citep{chen2020addernet}. Combining this arithmetic motivation with the biological observations leads to PoT-LIF: exponentially spaced levels follow $b^e$, where $b>1$ and $e$ is an integer, and choosing $b=2$ aligns the event amplitudes with binary digital arithmetic. The resulting events carry a sign and an integer exponent, while the membrane retains the unrepresented residual.

Neuromorphic hardware also supports carrying amplitude information in events. Intel's Loihi 2 supports graded spikes with integer-valued payloads of up to 32 bits; its technology brief reports that the commonly used 8-bit payload entails negligible extra energy cost relative to binary spike processing \citep{intel2021loihi2}. This provides engineering evidence that amplitude-bearing communication can preserve the energy advantages of sparse events, although the cost of a particular encoding depends on its implementation. Together, biological exponential responses, power-of-two arithmetic, and hardware support for graded events motivate the PoT-LIF representation.

\subsection{Spiking VLA models}\label{sec:2.4}

Wang et al. \citep{wang2026spikedriven} introduce Spike-driven VLA, which directly trains spiking perception, fusion, and action modules. Its 0.15B-parameter model achieves only 92.4\% mean LIBERO success, leaving task success as a key limitation.

Lyu et al. \citep{lyu2026dsvla} introduce DS-VLA, which combines dendritic spiking dynamics with inhibitory gating for robust action control. Although it improves resistance to action perturbations, its nominal LIBERO success rate remains only 91.6\%.

We identify limited amplitude information in binary spikes as a key
representational bottleneck, which limits success rates in spiking VLA control. This motivates PoT-LIF: signed power-of-two events convey magnitude information at low arithmetic cost, while residual retention refines the representation over successive events. We aim to improve end-to-end task success without sacrificing the arithmetic efficiency of event-based computation.

\section{Power-of-Two Leaky Integrate-and-Fire Neurons}\label{sec:3}

In this section, we define the PoT-LIF dynamics and SAC arithmetic. We then introduce the Heartbeat operating mode. Finally, we prove that under leak-free dynamics, Heartbeat guarantees geometric error reduction, exact recovery at zero residual, and reconstruction error bounded by θ upon threshold stopping.

\subsection{PoT-LIF event encoding and SAC}\label{sec:3.1}

PoT-LIF extends the discrete LIF recurrence in Sec.~\ref{sec:2.1} by replacing the binary spike with a signed power-of-two event and retaining the residual.

At each internal time step $t$, the pre-emission membrane potential
$u_t$ is obtained from the input contribution $I_t$ and the retained
residual $r_{t-1}$:
\nopagebreak[4]
\begin{equation}
	u_t=I_t+\lambda r_{t-1},
\end{equation}
where $\lambda$ is the membrane decay factor defined in
Sec.~\ref{sec:2.1}. In our implementation, the threshold is chosen on the event ladder:
\begin{equation}
	\theta=2^h,
	\qquad
	h\in\mathbb{Z},
	\qquad
	e_{\min}\le h\le e_{\max}.
	\label{eq:threshold_ladder}
\end{equation}
The full ladder contains $e_{\max}-e_{\min}+1=33$ positive levels,
or 66 signed nonzero levels.

Each nonzero event in PoTLIF is encoded as

\nopagebreak[4]
\begin{equation}
	s_t=\begin{cases}\operatorname{sgn}(u_t)2^{\lfloor\log_2|u_t|\rfloor},&|u_t|\ge\theta\ \text{and}\ u_t\ne0,\\0,&|u_t|<\theta\ \text{or}\ u_t=0,\end{cases}
	\label{eq:emission}
\end{equation}

On emitting steps, the ideal exponent is denoted by $e_t^{\mathrm{ideal}}=\lfloor\log_2|u_t|\rfloor$. In the finite-range implementation, the emitted exponent is clipped to

\nopagebreak[4]
\begin{equation}
	e_t=\begin{cases}e_{\min},&e_t^{\mathrm{ideal}}<e_{\min},\\e_t^{\mathrm{ideal}},&e_{\min}\le e_t^{\mathrm{ideal}}\le e_{\max},\\e_{\max},&e_t^{\mathrm{ideal}}>e_{\max},\end{cases}
	\label{eq:clipping}
\end{equation}
in this way, $e_{t}\in \mathbb{Z}$, and the exponent need not be evaluated when $s_t=0$. Thus, exponents within the prescribed window remain unchanged,
while those outside it are saturated to the corresponding boundary.
The exponent is evaluated only when an event is emitted. Each nonzero event therefore has
a magnitude in the discrete set
\begin{equation}
	|s_t|\in
	\left\{2^e:e\in\mathbb{Z},\ e_{\min}\le e\le e_{\max}\right\}.
	\label{eq:event_ladder}
\end{equation}
The exponent is carried in a signed 6-bit field, restricted to
$[-16,16]$, while the event sign is determined by
$\operatorname{sgn}(u_t)$ and stored in a separate sign bit.
This window specifies the event format rather than the weight format.

After emission, the remaining membrane potential is retained as
\nopagebreak[4]
\begin{equation}\label{eq:state_balance}
	r_t=u_t-s_t.
\end{equation}
Hence, in exact arithmetic,
\nopagebreak[4]
\begin{equation}
	s_t+r_t=u_t=I_t+\lambda r_{t-1},
\end{equation}
the residual $r_t$ is carried forward to the next internal time step. In this way, the pre-emission membrane potential is decomposed into the
emitted event and the retained residual. The setting $\lambda=1$
is the leak-free special case used by our heartbeat configuration.

Let $a$ denote an accumulator, $w$ a floating-point weight, and $s_t$ an
event emitted by PoT-LIF. In exact arithmetic, given an event $s_t$, 
 the scaling--accumulation map (SAC) is defined as
\begin{equation}
	\operatorname{SAC}(a,w,s_t)
	:= a+w s_t
	=
	\begin{cases}
		a+\operatorname{sgn}(u_t)
		\operatorname{Scale}_{2}(w,e_t),
		& s_t\ne0,\\
		a,
		& s_t=0,
	\end{cases}
	\label{eq:sac_definition}
\end{equation}
where $\operatorname{Scale}_{2}(w,e_t):=2^{e_t}w$.
The corresponding accumulator update is
\begin{equation}
	a\leftarrow\operatorname{SAC}(a,w,s_t).
	\label{eq:sac_update}
\end{equation}
Thus, SAC accumulates weighted event contributions, while
the event amplitudes are determined by the PoT-LIF emission rule.

For a receiving neuron $j$, the accumulator $a_j$ is initialized
to zero at each internal time step $t$. Applying SAC to the
presynaptic events $s_{t,i}$ with connection weights $w_{ji}$
produces the input contribution
\begin{equation}
	I_{t,j}=a_j=\sum_i \operatorname{SAC}(a,w_{ji}, s_{t,i}) .
	\label{eq:sac_input}
\end{equation}
The accumulated input is not restricted to power-of-two values. The receiving neuron then follows the same recurrence:
\nopagebreak[4]
\begin{equation}
	u_{t,j}=I_{t,j}+\lambda r_{t-1,j},
	\qquad
	r_{t,j}=u_{t,j}-s_{t,j}.
\end{equation}

\subsection{Heartbeat input schedule}\label{sec:3.2}

Heartbeat is an input schedule that applies the same recurrence to a single input within a weighted projection. At the start of each projection invocation, the local state is initialized to ${r}_{0}=0$, and the input is injected once:

\nopagebreak[4]
\begin{equation}
r_0=0,\qquad I_t=\begin{cases}x,&t=1,\\0,&t>1.\end{cases}
\label{eq:heartbeat-input}
\end{equation}

The neuron then executes at most $T$ event-extraction steps. The resulting approximation is

\nopagebreak[4]
\begin{equation}
q_T (x)=\sum_{t=1}^{T}s_t,
\label{eq:sum}
\end{equation}

At each tick, the pre-emission membrane potential, event, and
residual are computed using equations \eqref{eq:emission} and \eqref{eq:state_balance}. A local accumulator,
initialized to zero, is updated by $q\leftarrow q+s_t$ whenever
$s_t\neq 0$. Execution stops when the residual becomes zero,
when the emission gate produces no event, or when the tick
budget $T$ is exhausted.

For a vector input, the recurrence is applied elementwise, With projection weight matrix $W$, the event contributions are accumulated before the next nonlinearity: 

\nopagebreak[4]
\begin{equation}
Wq_T=\sum_{t=1}^{T}Ws_t.
\label{eq:linearity}
\end{equation}

The projection bias, if present, is added once. The event-time dimension therefore remains internal to the projection. In our configuration, the decay rate is set to $\lambda =1$, so a residual below the threshold cannot produce a later event in the absence of further input and extraction can stop immediately.

Figure~\ref{fig:1} illustrates the two operating modes. In Fig.~\ref{fig:1}a, presynaptic events carrying signed power-of-two amplitudes arrive through weighted synapses. The neuron integrates these contributions, emits events according to Eq.~\eqref{eq:emission}, and retains the residual for subsequent inputs. In Fig.~\ref{fig:1}b, Heartbeat injects a scalar input once and extracts its event representation over at most $T$ internal steps. With $\lambda =1$, the residual is retained without leakage between extraction steps and is reset at the end of the projection invocation.

\begin{figure}[H]\centering
\includegraphics[width=1.0\linewidth]{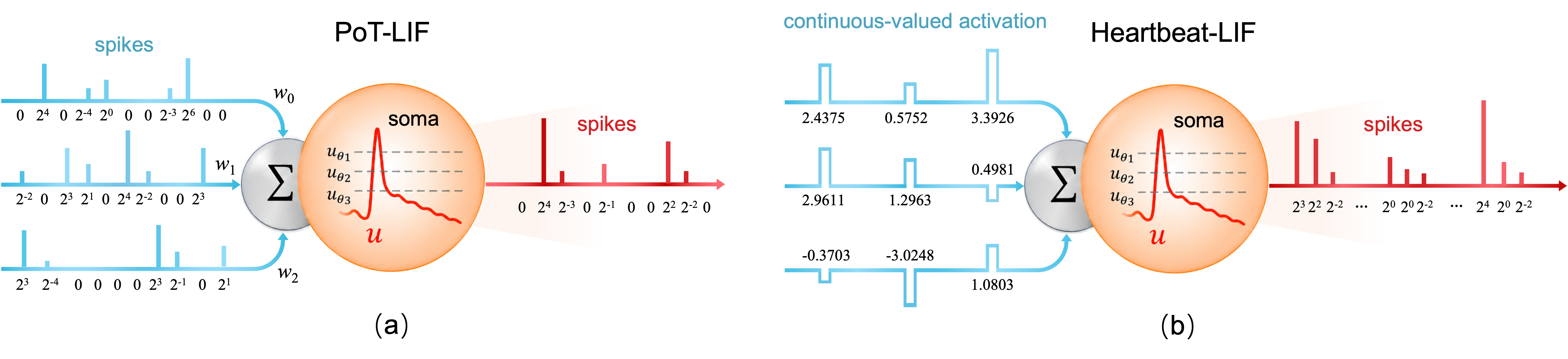}
\caption{PoT-LIF dynamics and power-of-two event encoding. (a) Standard mode: presynaptic events arrive through weighted synapses and are integrated into the membrane state. Each emitted event carries a sign and an integer exponent specifying its amplitude. The residual remains in the membrane state across successive input calls. (b) Heartbeat mode: a scalar input is injected once and decomposed into a bounded event sequence using the same recurrence with $\lambda =1$. The residual is retained between internal extraction steps, and the local state is reset at the end of the projection invocation.}\label{fig:1}
\end{figure}

\subsection{Reconstruction analysis}\label{sec:3.3}

In the reconstruction analysis, $r_T$ denotes the residual before the local state is reset. If extraction terminates at step $K<T$ under $\lambda=1$, we set $s_t=0$ and $r_t=r_K$ for the remaining analytical steps $t=K+1,\ldots,T$.

\begin{proposition}[Residual reconstruction for $\lambda=1$]\label{prop:residual}
Let $x\in\mathbb{R}$, the
initialization and input schedule is given in
Eq.~\eqref{eq:heartbeat-input}. In exact arithmetic, assume that $\lambda =1$, the exponent window $[e_{\min},e_{\max}]$, and the threshold
$\theta=2^h$, where $h\in\mathbb{Z}$ and
$e_{\min}\le h\le e_{\max}$, the unclipped emission rule in \eqref{eq:emission}. Let $K\le T$ denote the number of executed ticks, with
$r_K$ evaluated before the state reset. Then the input $x$ can be reconstructed: 
\begin{equation}
	x=q_T(x)+r_K.
	\label{eq:heartbeat_balance}
\end{equation}
In particular:  

(i) If an event is emitted at each of the $T$ steps and the residual remains nonzero, then

\nopagebreak[4]
\begin{equation}
|x-q_T(x)|\le2^{-T}|x|.
\label{eq:bound}
\end{equation}

(ii) If the residual reaches zero at some step ${t}_{0}\le T$, then

\nopagebreak[4]
\begin{equation}
q_T(x)=x.
\label{eq:exact}
\end{equation}

(iii) If extraction terminates since the remaining residual is below the threshold, then

\nopagebreak[4]
\begin{equation}
|x-q_T(x)|<\theta.
\label{eq:threshold-bound}
\end{equation}

\end{proposition}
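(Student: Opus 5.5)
The plan is to obtain everything from the state-balance identity \eqref{eq:state_balance} under the Heartbeat schedule \eqref{eq:heartbeat-input} with $\lambda=1$, and then derive the three special cases from the geometry of the emission rule \eqref{eq:emission}.

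\textbf{Step 1: the reconstruction identity.} With $r_0=0$, $I_1=x$ and $I_t=0$ for $t>1$, the recurrence gives $s_t+r_t=u_t=I_t+r_{t-1}$ for each executed tick $t\le K$. Summing over $t=1,\dots,K$ telescopes to
\[
\sum_{t=1}^{K}s_t+r_K=x+r_0=x .
\]
By the convention stated before the proposition, $s_t=0$ for $K<t\le T$, so the sum equals $q_T(x)$. This yields \eqref{eq:heartbeat_balance}, and the error is $x-q_T(x)=r_K$.

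\textbf{Step 2: one-step contraction for (i).} On an emitting tick, the unclipped rule gives $s_t=\operatorname{sgn}(u_t)2^{e}$ with $2^{e}\le|u_t|<2^{e+1}$. Hence $r_t=u_t-s_t$ has the same sign as $u_t$ (or is zero), and
\[
|r_t|=|u_t|-2^{e}<2^{e}\le|u_t| .
\]
Together these give $|r_t|<\tfrac12|u_t|$. Since $u_{t+1}=r_t$ for $t\ge1$ and $u_1=x$, induction gives $|r_T|\le 2^{-T}|x|$, which is \eqref{eq:bound} once combined with Step 1.

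The main point needing care is the hypothesis of the unclipped rule. With clipping to $e_{\max}$, the emitted magnitude can fall below $|u_t|/2$, and the halving fails. I would therefore state explicitly that the ideal exponent is used, or equivalently that clipping is inactive, as the contribution list indicates. The lower clip is irrelevant here because $|u_t|\ge\theta=2^h\ge2^{e_{\min}}$.

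\textbf{Step 3: exact and threshold stops for (ii) and (iii).} For (ii), if $r_{t_0}=0$, extraction stops, so $K=t_0$ and $r_K=0$; Step 1 then gives $q_T(x)=x$. For (iii), termination because the membrane potential $u_{K+1}=r_K$ satisfies $|r_K|<\theta$ gives $|x-q_T(x)|=|r_K|<\theta$. Because $\lambda=1$ and later inputs vanish, no further event could ever occur, so the stopping is consistent. The degenerate case $|x|<\theta$ is covered in the same way with $K=0$ and $r_0:=x$. I would note that convention as the only bookkeeping subtlety in the argument.
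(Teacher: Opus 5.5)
Your proof is correct and follows essentially the same route as the paper's: you telescope $s_t+r_t=I_t+r_{t-1}$ to get $x=q_T(x)+r_K$, derive $|r_t|<\tfrac12|u_t|$ from $|s_t|\le|u_t|<2|s_t|$ with matching signs, iterate this bound for (i), and read off $r_K$ directly for (ii) and (iii). The one change worth making is in the degenerate case $|x|<\theta$: if you count the non-emitting tick as executed, as the stopping rule ``the emission gate produces no event'' suggests, then $K\ge1$ and $r_K=u_K=x$, so you no longer need the redefinition $r_0:=x$, which clashes with the initialization $r_0=0$.
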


In case (i), the input range condition and $\theta\ge 2^{e_{\min}}$ prevents upper and lower exponent
saturation on emitting ticks, respectively. Each nonzero residual is therefore
reduced by at least a factor of two. Upper exponent saturation can
invalidate this geometric contraction, but it does not invalidate
Eq.~\eqref{eq:heartbeat_balance} or cases (ii) and (iii).
The proof is given in Appendix~\ref{app:A}.

For $\lambda<1$, any nonzero retained state is attenuated before subsequent emissions. If $K\le T$ ticks are executed, summing the recurrence gives
\begin{equation}
x-q_T(x)=r_K+(1-\lambda)\sum_{t=1}^{K-1}r_t.
\label{eq:leak-balance}
\end{equation}
The reconstruction error therefore includes both the terminal residual and accumulated leakage. Consequently, $q_T(x)$ generally does not reconstruct the full input, and the reconstruction bound above does not apply in this form.

Our Heartbeat configuration fixes $\lambda =1$ and set the minimum supported event magnitude as $\theta ={2}^{{e}_{\operatorname{min}}}={2}^{-16}$. This threshold prevents emission when the residual is smaller than the encoding can represent. If a lower threshold were activated, clipping the exponent to ${e}_{\operatorname{min}}$ could produce an event larger than the residual and reverse its sign.

When upper clipping is inactive, the decomposition successively extracts the nonzero power-of-two components of the input down to the minimum supported amplitude (Sec.~\ref{sec:3.1}). Exact reconstruction is obtained when all nonzero components lie within the supported exponent range and $T$ is sufficient to extract them. Otherwise, extraction either reaches the budget or stops with a residual smaller than $\theta $. For example, $x=1.4375$ produces the event sequence $(1,\,0.25,\,0.125,\,0.0625)$. Two events reconstruct $1.25$, whereas four reconstruct the input exactly.

The worst-case bound permits a relative error of up to 25\% at $T=2$. For positive inputs, this limit is approached as $x$ approaches ${2}^{e+1}$ from below, provided clipping and the threshold do not interfere with extraction. For example, $x=1.99$ leaves a residual of $0.49$, corresponding to a relative error of approximately 24.6\%. The rule truncates rather than rounds: the residual contains the binary components below the second extracted nonzero bit. It vanishes when the input has at most two nonzero binary components and both lie within the supported range.

For the significands which distribut over $[1,2)$ uniformly, the mean relative error is approximately 9.9\% at $T=2$, with a 95th percentile of 23.1\%. At $T=4$, these values decrease to 1.1\% and 4.0\%, respectively. These statistics describe the specified significand distribution rather than the activation distribution of the VLA model.

\section{PoT-LIF Integration into Pretrained Vision--Language--Action Models}\label{sec:4}

We apply the recurrence defined in Sec.~\ref{sec:3} to the vision encoder, vision--language backbone, and flow-matching action expert of pretrained VLA models. In this work, we choose $\pi_{0.5}$ \citep{black2024pi0,physicalintelligence2025pi05} as the foundation model. We first evaluate standard mode, in which the event-time dimension extends across layers within a bounded network-level sequence, and then Heartbeat, which confines event generation and weighted accumulation to individual projections. Each selected projection retains its original input and output shapes, preserving token organization, residual connections, KV-cache structure, and the action-generation interface.

\subsection{Projection-level event dataflow}\label{sec:4.1}

Figure~\ref{fig:2} shows the projection sites in the three principal components of $\pi_{0.5}$. Figure~\ref{fig:2}(a) follows the visual inputs through the Sigmoid Loss for Language--Image Pre-training (SigLIP) vision encoder \citep{zhai2023siglip} and PaliGemma vision--language model (VLM) \citep{beyer2024paligemma} to the action expert. Figure~\ref{fig:2}(b) illustrates how a PoT-LIF encoder supplies events to each selected weighted projection.

We incorporate PoT-LIF into 26 specific projection sites across six functional classes: patch embedding, query/key/value (Q/K/V), attention output, gated feed-forward layers, expert projections, and expert interfaces.

The standard-mode configurations use $T=1$ or $T=2$ internal time steps across the network for each image-and-prompt input. Membrane states are reset when that input is supplied and retained between internal steps as events propagate across layers. The $T$ output states are averaged at the velocity-field readout. State does not carry into the next image-and-prompt input.

In Heartbeat mode, Q/K/V, attention-output, and feed-forward projections accumulate the weighted contributions of a bounded event sequence. Attention-score computation, softmax, normalization, and residual addition retain their original operations. Event contributions are combined before the next nonlinearity, so the internal extraction steps do not expand the token sequence or replicate the KV cache. The projection output has the same shape as in the continuous-activation policy.

For consistency with Sec.~\ref{sec:3}, we write a projection using column-vector notation, $y=Wx$, with bias omitted. Heartbeat produces the approximation $\widehat{y}=W{q}_{T}(x)$ by accumulating at most $T$ events per input element:

\nopagebreak[4]
\begin{equation}
	\widehat{y}=\sum_{t=1}^{T}Ws_t,\qquad Ws_t=\sum_{i:s_{t,i}\ne0}\sigma_{t,i}\operatorname{Scale}_{2}(W_{:,i},e_{t,i}).
	\label{eq:projection}
\end{equation}

Here, ${W}_{:,i}$ is the weight column associated with input element $i$, and $\operatorname{Scale}_{2}$ applies power-of-two scaling to its floating-point entries. The inner sum includes only nonzero events. For normal, representable results, scaling adjusts the weight exponent and combines the event and weight signs without a general significand multiplication.

Under exact arithmetic, the projection error is $\widehat{y}-y=W({q}_{T}(x)-x)$. Thus, the residual bounds in Sec.~\ref{sec:3} characterize the input reconstruction error; its effect on the projection also depends on $W$. Finite-precision scaling and accumulation can introduce additional numerical error. SAC requires event-exponent decoding, exponent addition, and sign handling before accumulation.  

At $T=1$, standard mode and Heartbeat are equivalent under the evaluated design: each projection performs a single emission step per input element from a zero initial state, with no subsequent temporal update. We report this shared configuration as standard mode at $T=1$. At $T=2$, standard mode retains a network-wide temporal sequence, whereas Heartbeat performs both extraction steps locally within each projection.

This scheduling distinction also affects weight movement. On conventional hardware that reloads projection weights for each network pass, standard mode at $T=2$ requires two rounds of weight movement, in addition to repeated network execution and membrane-state access. Replacing multiplication with SAC therefore does not by itself establish an overall energy advantage for this implementation. This limitation motivates Heartbeat: processing consecutive events within a single projection allows a suitable datapath to reuse resident weight tiles across extraction steps before advancing to the next layer. 

\begin{figure}[H]\centering
\includegraphics[width=0.9\linewidth]{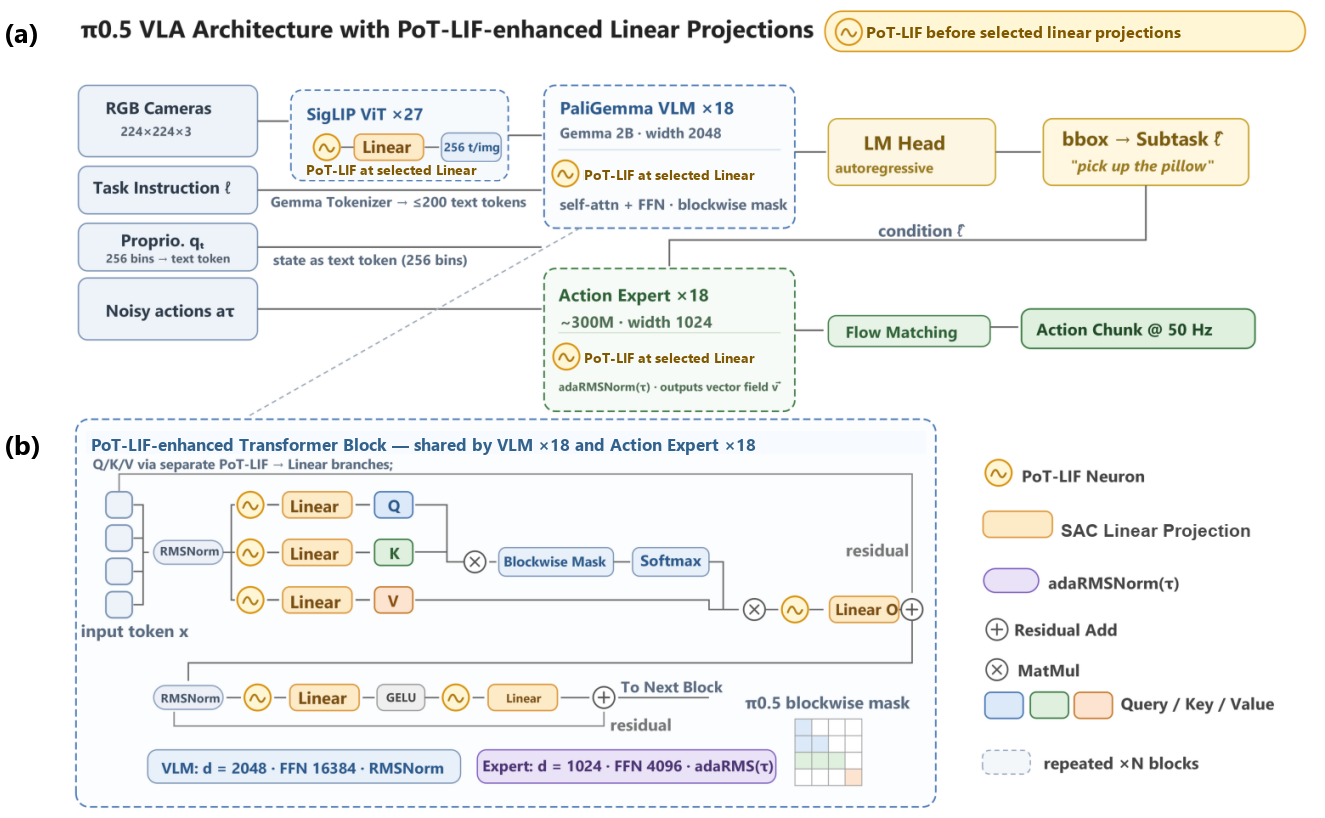}
\caption{(a) PoT-LIF integration into $\pi_{0.5}$. Heartbeat encoders supply signed power-of-two events to selected weighted projections in the vision encoder, vision--language backbone, and action expert. Event contributions are evaluated through scaling--accumulation (SAC) within each projection. Attention computation, the blockwise mask, KV-cache structure, residual paths, and action interfaces retain their original organization. Dashed boxes indicate the three module groups; (b) shows the event--projection pairs within a block. Abbreviations: RGB, red--green--blue; ViT, vision transformer; LM, language model; FFN, feed-forward network; RMSNorm, root mean square normalization; adaRMSNorm, adaptive RMSNorm; GELU, Gaussian error linear unit; bbox, bounding box; proprio., proprioceptive state; MatMul, matrix multiplication.}\label{fig:2}
\end{figure}

\section{Robot Experiments}\label{sec:5}

We initialize the PoT-LIF policies from pretrained $\pi_{0.5}$ and fine-tune them for 5,000 optimizer updates with an effective batch size of 256. The main configurations use the symmetric emission threshold $\theta=2^{-16}$ and $\lambda=1$. The forward pass uses signed power-of-two events, while the backward pass uses a straight-through estimator (STE), which substitutes a prescribed gradient for the discontinuous event-generation rule \citep{neftci2019surrogate}. The matched-budget continuous control receives the same fine-tuning budget. Detailed training settings are provided in Appendix~\ref{app:E}.

\subsection{Protocol}\label{sec:5.1}

We evaluate on the Spatial, Object, Goal, and LIBERO-10 (Long-Horizon) suites \citep{liu2023libero}, with ten tasks and 500 closed-loop episodes per suite, giving 2,000 episodes per configuration. Conditions share initial states, paired inference noise, and the action interface (evaluation seed 7). Two RGB views, from the frontal base and left wrist cameras, are resized to $224\times 224$; a third wrist slot is inactive. The policy predicts 10-step action chunks and replans every five environment steps. Actions between replanning points are executed open loop. PoT-LIF internal steps, flow-solver steps, and environment steps are distinct time axes.

\subsection{Main results}\label{sec:5.2}

Standard mode achieves 96.90\% overall success at $T=1$ and 96.35\% at $T=2$, compared with 96.70\% for the continuous-activation control. Heartbeat achieves 97.25\% at $T=2$ and 96.90\% at $T=4$ (Table~\ref{tab:1}). Success rates remain essentially unchanged under the evaluated protocol; paired significance tests and detailed results are reported in Appendix~\ref{app:G}.

{\small
\setlength{\tabcolsep}{4pt}
\renewcommand{\arraystretch}{1.2}
\begin{longtable}{@{}>{\raggedright\arraybackslash}p{\dimexpr 0.34\linewidth-1.7142857142857142\tabcolsep\relax}>{\raggedright\arraybackslash}p{\dimexpr 0.11\linewidth-1.7142857142857142\tabcolsep\relax}>{\raggedright\arraybackslash}p{\dimexpr 0.11\linewidth-1.7142857142857142\tabcolsep\relax}>{\raggedright\arraybackslash}p{\dimexpr 0.1\linewidth-1.7142857142857142\tabcolsep\relax}>{\raggedright\arraybackslash}p{\dimexpr 0.12\linewidth-1.7142857142857142\tabcolsep\relax}>{\raggedright\arraybackslash}p{\dimexpr 0.12\linewidth-1.7142857142857142\tabcolsep\relax}>{\raggedright\arraybackslash}p{\dimexpr 0.1\linewidth-1.7142857142857142\tabcolsep\relax}@{}}
\caption{Main LIBERO results (success rate per suite over 500 episodes; 2,000 per configuration). $\Delta$ is the overall percentage-point difference from the continuous-activation control.}\label{tab:1}\\
\toprule
Configuration & Spatial & Object & Goal & Long-H. & Overall & $\Delta $ \\
\midrule\endfirsthead
\multicolumn{7}{l}{\small\itshape Table \thetable\ (continued)}\\\toprule
Configuration & Spatial & Object & Goal & Long-H. & Overall & $\Delta $ \\
\midrule\endhead
\midrule\multicolumn{7}{r}{\small\itshape Continued on next page}\\\endfoot\bottomrule\endlastfoot
Continuous activation  & 98.8 & 98.6 & 97.0 & 92.4 & 96.70 & -- \\
Continuous, matched budget & 98.0 & 97.6 & 97.8 & 91.6 & 96.25 & $-$0.45 \\
Standard mode $T=1$ & 99.2 & 98.4 & 97.6 & 92.4 & 96.90 & $+$0.20 \\
Standard mode $T=2$ & 98.2 & 97.8 & 97.4 & 92.0 & 96.35 & $-$0.35 \\
Heartbeat $T=2$  & 99.2 & 97.4 & 98.2 & 94.2 & 97.25 & $+$0.55 \\
Heartbeat $T=4$  & 99.2 & 98.0 & 97.8 & 92.6 & 96.90 & $+$0.20 \\
\end{longtable}}

Performance varies across suites, with no single configuration achieving the highest success rate on all four. The four event-based configurations achieve 97.4--98.4\% success on Object, compared with 98.6\% for the continuous-activation control. On Long-Horizon, Heartbeat at $T=2$ achieves 94.2\%, compared with 92.4\% for the control. Additional ablations of the architecture and training procedure are reported in Appendix~\ref{app:C}.

\subsection{Event statistics and the choice of $T$}\label{sec:5.3}

Let ${n}_{l}$ be the number of input elements processed at projection site $l$, ${a}_{l}$ the number of nonzero events emitted there, and ${d}_{l}$ its output width. The product-weighted event ratio is $\mu =(\sum _{l}^{}{d}_{l}{a}_{l})/(\sum _{l}^{}{d}_{l}{n}_{l})$. Weighting by ${d}_{l}$ counts the event--weight pairs across all outputs, including fused QKV projections. For the sampled workload, ${\mu }_{2}=1.900$ and ${\mu }_{4}=3.480$. The projection-activity accounting covers 26 instrumented sites under the eight-observation sampling protocol described below. Its dense reference count is $N=\sum_l d_l n_l=2.271\times {10}^{12}$ MACs, counting one weight--input product and its accumulation for each input--output connection. This count covers the instrumented weighted projections, not all operations in a complete end-to-end policy inference. Heartbeat with $T=2$ emits $1.240\times {10}^{9}$ events, corresponding to $4.314\times {10}^{12}$ event--weight pairs; $T=4$ emits $2.270\times {10}^{9}$ events, corresponding to $7.902\times {10}^{12}$ pairs. Each nonzero event--weight pair contributes one SAC operation.

We use $T=2$ as the default Heartbeat setting because it combines 97.25\% success with fewer event--weight operations than $T=4$. The latter requires approximately 1.83 times as many SAC operations, while scoring 96.90\%. Operation counts use eight fixed observations (one environment observation and seven demonstration frames) over the 26 projection sites.

\section{Operation-Level Energy Accounting}\label{sec:6}

A reference product is one dense weight--input product; its event-coded replacement may require multiple event--weight products. We compare the arithmetic energy of dense FP32 MAC computation with PoT-LIF SAC computation using matched FP32 accumulation. The comparison uses an illustrative 45-nm arithmetic model and includes only multiplication or exponent scaling and accumulation. Data movement, memory access, control logic, event encoding, and other system-level costs are excluded.

The dense reference uses 4.6~pJ per FP32 MAC~\citep{sengupta2019going,horowitz2014energy}. For SAC, we use 0.9~pJ for FP32 addition and a 0.1~pJ integer-addition proxy for exponent addition, giving an illustrative arithmetic cost of 1.0~pJ per event--weight pair. The exponent-addition term uses a 32-bit integer-addition reference as a proxy. The floating-point addition cost represents the accumulation arithmetic, including its associated alignment, normalization, and rounding.

The 2.271 trillion reference MACs come from the 26-site projection-activity accounting in Sec.~\ref{sec:5.3}, based on eight fixed observations (one environment observation and seven demonstration frames). They count the covered weight--input products, excluding attention-score products, softmax, normalization, and other operations outside those projections. Multiplying this projection workload by 4.6~pJ per MAC gives an estimated dense arithmetic energy of 10.45~J. For the new $T=1$ setting, using the reported 60.01\% firing rate as the effective event--weight activity fraction gives $\mu_1=0.6001$. Applied to the same reference workload, this implies approximately 1.363 trillion SAC operations and 1.36~J, corresponding to an 86.95\% reduction. The sampled $T=2$ count of 4.314 trillion SAC operations gives 4.31~J, corresponding to a 58.7\% reduction. For $T=4$, 7.902 trillion SAC operations give 7.90~J, corresponding to a 24.4\% reduction.

The arithmetic-energy advantage depends on both the SAC unit cost and the number of events. With $\mu_1=0.6001$ at $T=1$ and an assumed SAC arithmetic cost of 1.0~pJ, the reference-to-SAC energy ratio is $4.6/(0.6001\times1.0)\approx7.67$, corresponding to an 86.95\% energy saving. Increasing the assumed SAC cost to 1.1 or 1.2~pJ gives 0.66011 or 0.72012~pJ per reference product, reducing the estimated energy saving to 85.65\% or 84.35\%, respectively.

For $f$ reference products per second, modeled arithmetic power is $4.6f$~pW for FP32 MAC and $0.6001f$~pW for PoT-LIF ($T=1$, firing rate 60.01\%).

\begin{figure}[!htbp]
	\centering
	\includegraphics[width=0.9\linewidth]{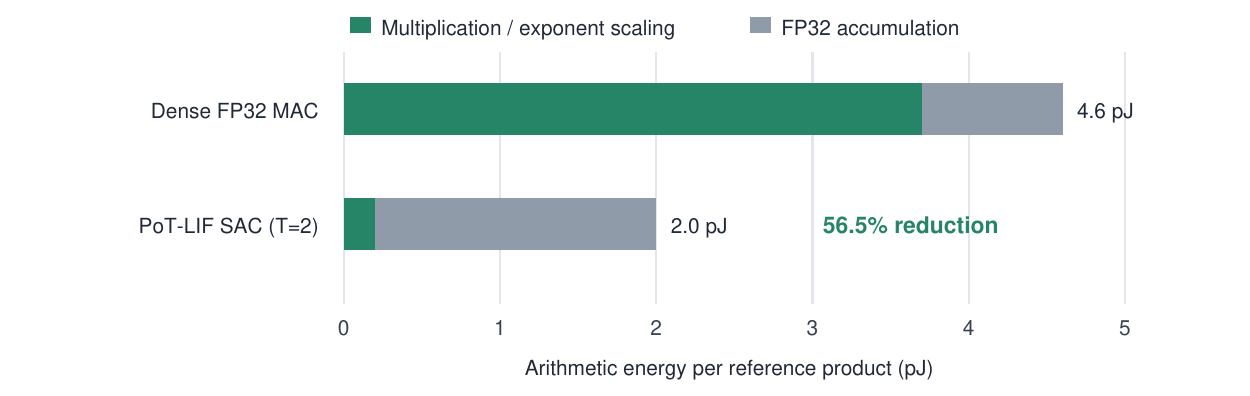}
	\caption{Average arithmetic energy per reference product with matched FP32 accumulation. One FP32 MAC costs 4.6~pJ. At $T=1$ and a 60.01\% firing rate, the PoT-LIF SAC path averages 0.6001~pJ: 0.06001~pJ for exponent scaling and 0.54009~pJ for accumulation. The reference-to-SAC energy ratio is 7.67, corresponding to a modeled reduction of 86.95\%. Other power terms are excluded.}
	\label{fig:4}
\end{figure}

We estimate the energy of PoT-LIF relative to a dense FP8 E4M3 baseline using matched FP8 weights and FP32 accumulation, with a 45\,nm reference cost model augmented by engineering proxies for exponent and control operations.
For $T=1$ and $T=2$ ticks with exponent-window half-width $k=4$, sampled fanout-weighted activity yields $0.6001$ and $1.0565$ SAC contributions per original dense projection product, respectively.
The modeled projection-arithmetic cost consequently decreases from $1.4646$\,pJ to $0.5926$\,pJ and $1.0432$\,pJ, corresponding to reductions of $59.54\%$ and $28.77\%$.
Including neuron encoding and the other modeled computation gives compute-only savings of $56.25\%$ and $27.16\%$.
Under an identical fixed data-movement budget of $2.518$\,J per policy inference (ten denoising steps), the modeled dynamic-energy subtotal decreases from $6.038$\,J to $4.058$\,J and $5.082$\,J, yielding savings of $32.79\%$ and $15.83\%$.

\begin{figure}[!htbp]
	\centering
	\includegraphics[width=0.84\linewidth]{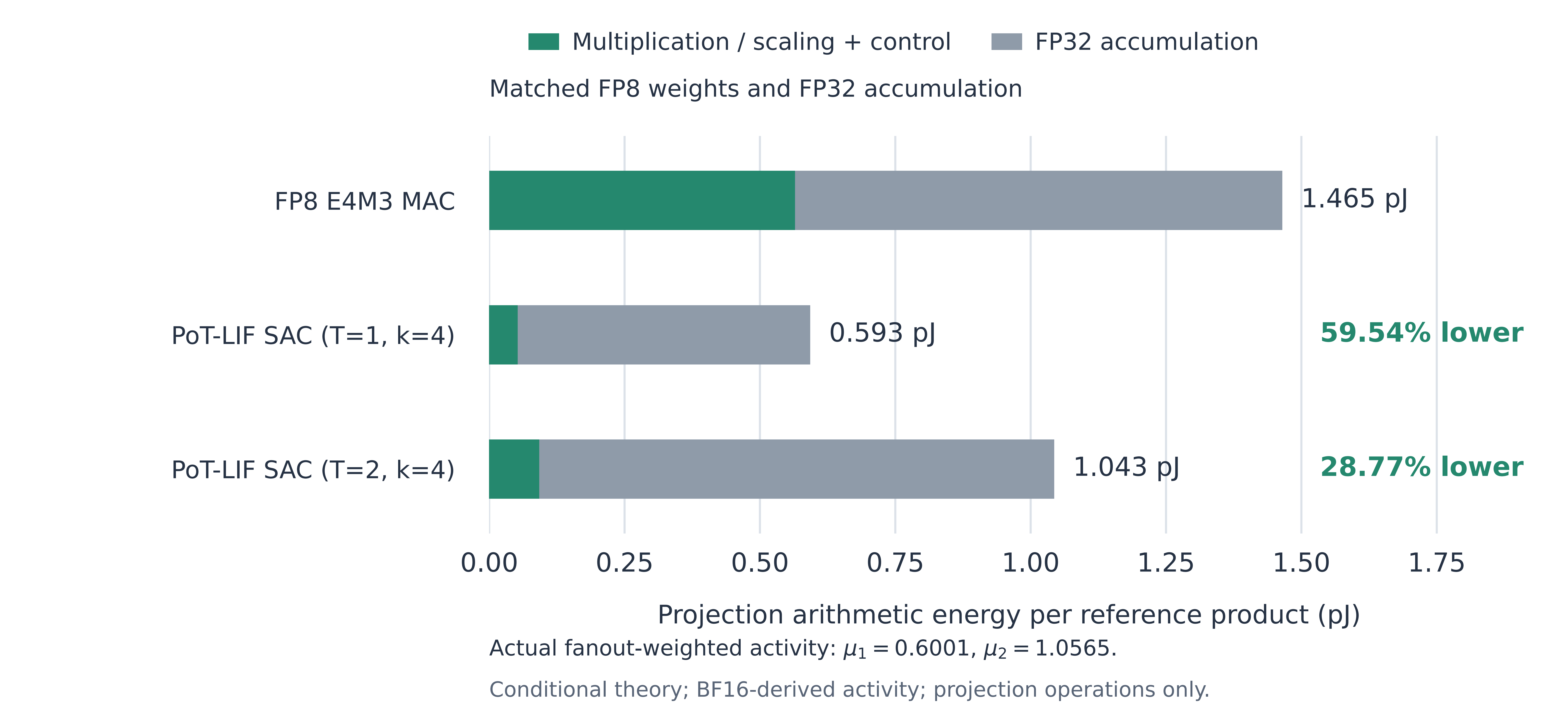}
	\caption{\textbf{Modeled projection arithmetic energy relative to an FP8 E4M3 baseline.}
		Both methods use FP8 weights and FP32 accumulation.
		Energy is normalized per original dense projection product using sampled fanout-weighted activity: $0.6001$ and $1.0565$ SAC contributions for $T=1$ and $T=2$, respectively, with $k=4$.
		Green denotes multiplication or exponent scaling and control; gray denotes FP32 accumulation.
		PoT-LIF reduces modeled projection arithmetic energy by $59.54\%$ and $28.77\%$.
		These estimates conditionally transfer BF16-derived activity to FP8 weights and exclude neuron encoding, shared computation, and data movement.}
	\label{fig:5}
\end{figure}

\begin{figure}[!htbp]
	\centering
	\includegraphics[width=0.84\linewidth]{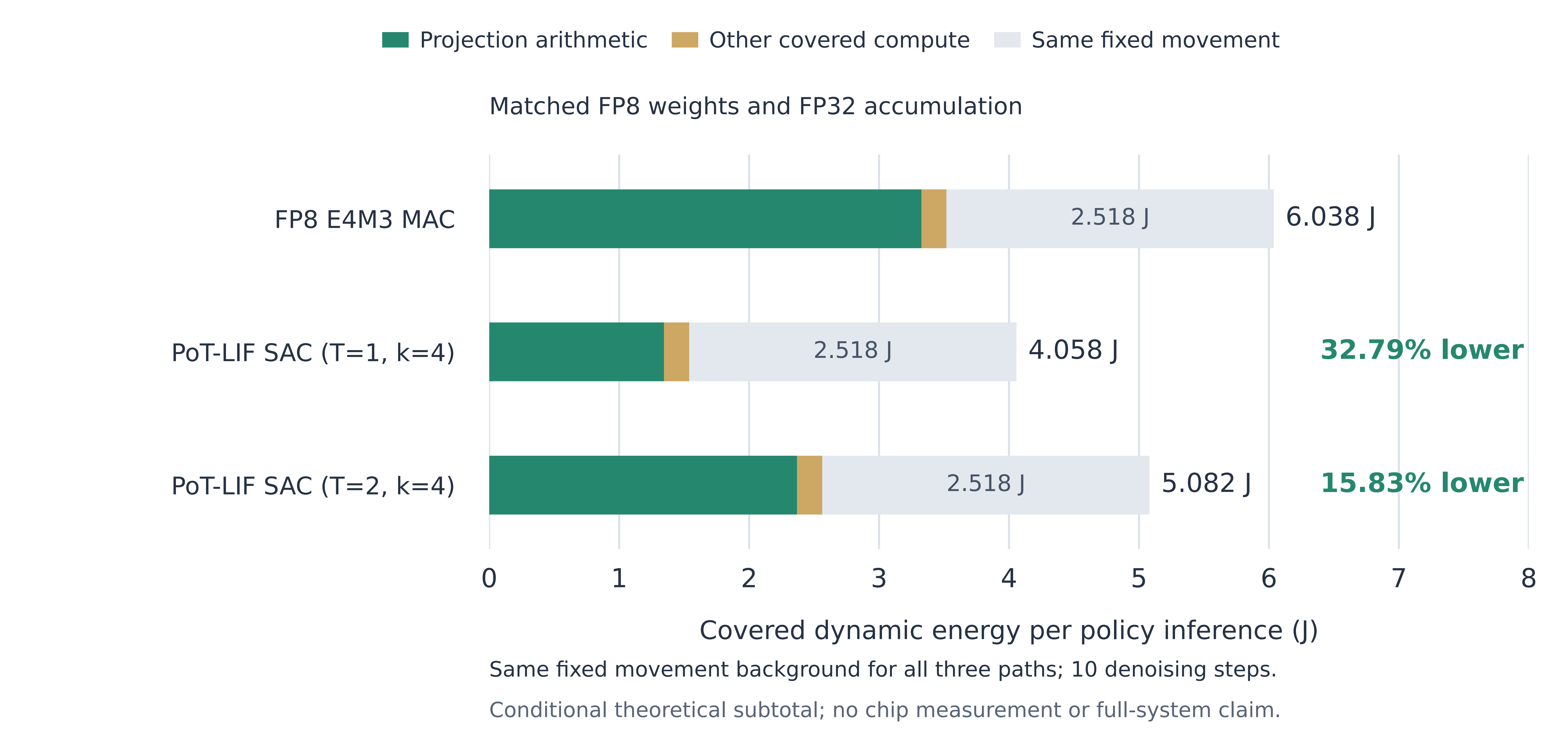}
	\caption{\textbf{Modeled dynamic energy per policy inference with a matched fixed data-movement budget.}
		Each inference includes ten denoising steps; all configurations use FP8 weights and FP32 accumulation.
		Green denotes projection arithmetic, gold denotes neuron encoding and other modeled computation, and light gray denotes the identical $2.518$\,J data-movement budget.
		With $k=4$, the modeled subtotal decreases from $6.038$\,J to $4.058$\,J for $T=1$ and $5.082$\,J for $T=2$, corresponding to savings of $32.79\%$ and $15.83\%$.
		Results are conditional on BF16-derived activity and represent partial energy estimates, not hardware measurements or complete system energy.}
	\label{fig:6}
\end{figure}

\FloatBarrier
\section{Discussion}\label{sec:7}

As demonstrated, PoT-LIF combines exponent-based event amplitudes with residual membrane dynamics. Standard mode and Heartbeat use the same recurrence but organize its execution differently: standard mode retains temporal processing across layers, while Heartbeat confines event generation and accumulation to individual projections. This local schedule preserves the pretrained policy's external token and cache organization and allows the event budget to be set within each projection.

The two schedules offer different paths toward deployment. For the current frame-based VLA workload, the repeated weight movement of standard mode at $T=2$ adds to its energy cost on conventional hardware. Reducing data-movement costs would allow a larger share of the arithmetic savings from power-of-two event coding to translate into end-to-end energy savings. Near-memory and compute-in-memory implementations that support resident weights, local membrane state, and efficient event accumulation offer a route to reducing the repeated weight-access cost of standard mode. Future work will co-design standard-mode event scheduling, membrane-state placement, and weight reuse for these architectures. Standard mode also retains the potential to process asynchronous event data because membrane state evolves with successive inputs.

The reconstruction guarantees in Sec.~\ref{sec:3.3} characterize the fidelity of the event representation, whereas closed-loop success also depends on how projection errors propagate through the policy and affect action generation. A larger event budget can therefore improve scalar reconstruction under the stated conditions but need not yield higher task success. In our experiments, increasing the Heartbeat budget from $T=2$ to $T=4$ increases the number of event--weight operations without improving the observed success rate. This supports selecting the event budget jointly from task performance and computational work, rather than from reconstruction accuracy alone.

Finally, regarding arithmetic efficiency, under the per-product accounting used here, the $T=1$ SAC path at a 60.01\% firing rate averages 0.6001~pJ per reference product. Its modeled arithmetic energy is approximately reach to an 86.95\% reduction.

\section{Conclusion}\label{sec:8}

This work introduces PoT-LIF and successfully integrated its signed power-of-two events and residual-preserving dynamics into pretrained $\pi_{0.5}$. The proposed Heartbeat mode achieves 97.25\% LIBERO success compared with 96.70\% for the original policy, preserving task performance while replacing dense projection multiplication with event-based scaling and accumulation. Under the matched operation-level energy model, the $T=1$ path at a 60.01\% firing rate uses approximately $1/7.67$ of the dense-reference arithmetic energy, corresponding to an 86.95\% reduction.

PoT-LIF is broadly applicable beyond $\pi_{0.5}$ to VLA models with compatible weighted-projection architectures. By localizing event processing within projections, Heartbeat preserves token layout, residual connections, KV-cache organization, and the action interface. This offers a route to easing the energy demands of edge VLA deployment while maintaining task success, as demonstrated here on $\pi_{0.5}$. In conclusion, PoT-LIF enables energy-efficient VLA inference by preserving closed-loop task performance while reducing arithmetic costs, paving the way for more efficient and sustainable embodied AI.

\section*{Acknowledgments}

We thank the open-source robot-learning and neuromorphic-computing communities for the software, benchmarks, and published research that enabled this study.

\clearpage
\appendix

\section*{Appendix}

\section{Proof of Proposition~\ref{prop:residual}}
\label{app:A}

\begin{proof}
Under the assumptions of Proposition~\ref{prop:residual}, $r_0=0$, $I_1=x$, $\lambda=1$, and $I_t=0$ for $t>1$. Thus $u_1=x$, while $u_t=r_{t-1}$ for $t>1$. Summing $r_t=I_t+r_{t-1}-s_t$ gives
\[r_T=x-q_T(x).\]
When an event fires under the unclipped rule,
\[|s_t|=2^{\lfloor\log_2|u_t|\rfloor},\qquad |s_t|\le |u_t|<2|s_t|.\]
Because the event has the same sign as the potential,
\[|r_t|=|u_t|-|s_t|<\tfrac12|u_t|.\]
(i) If all $T$ steps emit and the residual remains nonzero, repeated application gives $|x-q_T(x)|=|r_T|\le2^{-T}|x|$.
(ii) If $r_{t_0}=0$ for some $t_0\le T$, all subsequent events are zero and $q_T(x)=x$.
(iii) If extraction stops because $|u_t|<\theta$, the residual is held fixed for the remaining analytical steps, so $|x-q_T(x)|=|r_T|<\theta$.
For $\lambda<1$, summing over the $K$ executed ticks gives Eq.~\eqref{eq:leak-balance}; the accumulated leakage prevents the same residual-only reconstruction bound. The halving argument requires inactive exponent clipping, as specified in Sec.~\ref{sec:3.3}.
\end{proof}

\section{Configurable Response: Definition and Fits}\label{app:B}

Let $\{{\theta }_{k}\}$ be ordered thresholds and ${c}_{k}$ the increment associated with threshold $k$, represented by a bounded sum of signed power-of-two terms. The output response is

\nopagebreak[4]
\begin{equation}
f_{\Theta,\mathcal{C}}(x)=c_0+\sum_k c_k\mathbf{1}[x\ge\theta_k],\qquad c_k=\sum_{j=1}^{B_k}b_{kj}2^{e_{kj}}.
\label{eq:response}
\end{equation}

Here, ${c}_{0}$ is the response for $x<{\theta }_{1}$, ${b}_{kj}\in \{-1,+1\}$, ${e}_{kj}$ is an integer exponent, and ${B}_{k}$ is the local pulse budget. Threshold positions set the segment boundaries, and the pulse codes separately set the increments; $c_k$ is not constrained to equal $\theta_k$. These parameters can be fixed to construct a response or optimized using a straight-through estimator (STE), which replaces a discontinuous forward derivative with a prescribed backward value, \citep{neftci2019surrogate}. The VLA experiments use the greedy extraction rule of Sec.~\ref{sec:3.1} and retain the original network nonlinearities; they do not use this threshold table. The budget ${B}_{k}$ per threshold increment is distinct from the event budget $T$ per projection input. Figure~\ref{fig:3} in Sec.~\ref{sec:5.4} reports fits to three reference nonlinearities using this construction.

\section{Supplementary Surrogate and Training Comparisons}\label{app:C}

{\small
\setlength{\tabcolsep}{4pt}
\renewcommand{\arraystretch}{1.2}
\begin{longtable}{@{}>{\raggedright\arraybackslash}p{\dimexpr 0.47\linewidth-1.3333333333333333\tabcolsep\relax}>{\raggedright\arraybackslash}p{\dimexpr 0.32\linewidth-1.3333333333333333\tabcolsep\relax}>{\raggedright\arraybackslash}p{\dimexpr 0.21\linewidth-1.3333333333333333\tabcolsep\relax}@{}}
\caption{STE training ablation over 2,000 paired LIBERO episodes per configuration. Both runs start from the same pretrained checkpoint and are fine-tuned separately with the indicated surrogate.}\label{tab:2}\\
\toprule
Configuration & Surrogate setting & Success (\%) \\
\midrule\endfirsthead
\multicolumn{3}{l}{\small\itshape Table \thetable\ (continued)}\\\toprule
Configuration & Surrogate setting & Success (\%) \\
\midrule\endhead
\midrule\multicolumn{3}{r}{\small\itshape Continued on next page}\\\endfoot\bottomrule\endlastfoot
Heartbeat $T=4$, identity STE & identity surrogate & 96.90 \\
Heartbeat $T=4$, magnitude STE & magnitude surrogate & 96.90 \\
\end{longtable}}

The identity-STE and magnitude-STE Heartbeat models at $T=4$ are fine-tuned separately from the same initial $\pi_{0.5}$ checkpoint, yielding a distinct final checkpoint for each surrogate. Both achieve 96.90\% success under the paired evaluation protocol. The matched-budget continuous control (96.25\%) continues fine-tuning $\pi_{0.5}$ with continuous activations for the same number of updates as the event-based variants, providing a reference that accounts for the additional training budget. Standard mode at $T=2$ achieves 96.35\% (Table~\ref{tab:1}). Relative to Heartbeat, it changes both temporal execution across layers and the velocity-field readout, so this comparison assesses their combined effect.

\section{Event-Representation Comparison}\label{app:D}

Table~\ref{tab:3} compares our result with the reported nominal LIBERO success rates of other spiking VLA models.

{\small
	\setlength{\tabcolsep}{4pt}
	\renewcommand{\arraystretch}{1.2}
	\begin{longtable}{@{}>{\raggedright\arraybackslash}p{\dimexpr 0.25\linewidth-1.5\tabcolsep\relax}>{\raggedright\arraybackslash}p{\dimexpr 0.14\linewidth-1.5\tabcolsep\relax}>{\raggedright\arraybackslash}p{\dimexpr 0.41\linewidth-1.5\tabcolsep\relax}>{\raggedright\arraybackslash}p{\dimexpr 0.2\linewidth-1.5\tabcolsep\relax}@{}}
		\caption{Reported nominal LIBERO success of spiking VLA models. Results use each study's architecture and training protocol; --- indicates an unlisted parameter count. The PoT-LIF row repeats the Heartbeat $T=2$ result in Table~\ref{tab:1}.}\label{tab:3}\\
		\toprule
		Method & Parameters & Implementation & LIBERO mean \\
		\midrule\endfirsthead
		\multicolumn{4}{l}{\small\itshape Table \thetable\ (continued)}\\\toprule
		Method & Parameters & Implementation & LIBERO mean \\
		\midrule\endhead
		\midrule\multicolumn{4}{r}{\small\itshape Continued on next page}\\\endfoot\bottomrule\endlastfoot
		Spike-driven VLA \citep{wang2026spikedriven} & 0.15B & fully spiking, direct training & 92.4\% \\
		DS-VLA \citep{lyu2026dsvla} & --- & dendritic spiking action control & 91.6\% \\
		PoT-LIF on $\pi_{0.5}$ & 3.3B & Heartbeat $T=2$ projections & \textbf{97.25\%} \\
\end{longtable}}

PoT-LIF achieves the highest reported nominal success among the models in Table~\ref{tab:3}: 97.25\%, exceeding Spike-driven VLA by 4.85 percentage points and DS-VLA by 5.65 points. These results highlight the task-success advantage of combining amplitude-bearing events with a strong pretrained VLA.

Table~\ref{tab:4} compares the event representations and integration strategies of the spiking VLA systems discussed in Sec.~\ref{sec:2}. The two PoT-LIF modes share signed power-of-two events and residual membrane updates. Standard mode propagates events across layers, whereas Heartbeat accumulates their weighted contributions within each projection. The distinction concerns the execution schedule of the recurrence, as described in Secs.~\ref{sec:3}--\ref{sec:4}.

{\small
\setlength{\tabcolsep}{4pt}
\renewcommand{\arraystretch}{1.2}
\begin{longtable}{@{}>{\raggedright\arraybackslash}p{\dimexpr 0.23\linewidth-1.3333333333333333\tabcolsep\relax}>{\raggedright\arraybackslash}p{\dimexpr 0.34\linewidth-1.3333333333333333\tabcolsep\relax}>{\raggedright\arraybackslash}p{\dimexpr 0.43\linewidth-1.3333333333333333\tabcolsep\relax}@{}}
\caption{Event representations and integration strategies. Entries summarize the cited models; the PoT-LIF rows refer to the $\pi_{0.5}$ implementation evaluated here.}\label{tab:4}\\
\toprule
Model & Event representation & Integration strategy \\
\midrule\endfirsthead
\multicolumn{3}{l}{\small\itshape Table \thetable\ (continued)}\\\toprule
Model & Event representation & Integration strategy \\
\midrule\endhead
\midrule\multicolumn{3}{r}{\small\itshape Continued on next page}\\\endfoot\bottomrule\endlastfoot
Spike-driven VLA \citep{wang2026spikedriven} & Binary spikes & Spiking perception, fusion, and action modules \\
DS-VLA \citep{lyu2026dsvla} & Spiking dynamics & Dendritic-inspired architecture for action control \\
PoT-LIF standard mode & Signed power-of-two events; residual retention & Temporal sequence across layers \\
PoT-LIF Heartbeat & Signed power-of-two events; residual retention & Event generation and accumulation within each projection \\
\end{longtable}}

\section{Run-Mode Parameters and Numerical Formats}\label{app:E}

The main configurations use bfloat16 (BF16) weights; the main Heartbeat configurations use FP32 membrane state. The supplementary FP8/A24 configuration uses FP8 E4M3FN weights \citep{micikevicius2022fp8}, decoded through a BF16 container, and a 24-bit signed block-floating-point persistent accumulator. E4M3FN denotes a finite-value FP8 format with four exponent bits and three fraction bits.

{\footnotesize
\setlength{\tabcolsep}{4pt}
\renewcommand{\arraystretch}{1.1}
\begin{longtable}{@{}>{\raggedright\arraybackslash}p{\dimexpr 0.23\linewidth-1.3333333333333333\tabcolsep\relax}>{\raggedright\arraybackslash}p{\dimexpr 0.38\linewidth-1.3333333333333333\tabcolsep\relax}>{\raggedright\arraybackslash}p{\dimexpr 0.39\linewidth-1.3333333333333333\tabcolsep\relax}@{}}
\caption{Operating-mode parameters. Both modes use the integration and residual balance in Eq.~\eqref{eq:state_balance} and the emission rule in Eq.~\eqref{eq:emission}; the evaluated policies use ten external flow-matching solver steps, distinct from the internal event budget.}\label{tab:5}\\
\toprule
Parameter & Standard mode & Heartbeat \\
\midrule\endfirsthead
\multicolumn{3}{l}{\small\itshape Table \thetable\ (continued)}\\\toprule
Parameter & Standard mode & Heartbeat \\
\midrule\endhead
\midrule\multicolumn{3}{r}{\small\itshape Continued on next page}\\\endfoot\bottomrule\endlastfoot
Input flow & one current per tick & single injection ${I}_{1}=x$ \\
Decay $\lambda $ & configurable (main runs: 1) & 1 \\
Emission threshold $\theta $ & Two-sided; main runs: ${2}^{-16}$ & two-sided, code floor ${2}^{{e}_{\operatorname{min}}}={2}^{-16}$ \\
State initialization & ${r}_{0}=0$ for each image-and-prompt input & ${r}_{0}=0$ per projection invocation; ${u}_{1}=x$ \\
Residual update and reset & Subtractive update each internal step; reset at next image-and-prompt input & Subtractive update; reset after $T$ steps or early termination \\
Event application & events propagate downstream & applied inside the projection \\
Cross-layer time axis & present & none \\
\end{longtable}}

{\small
\setlength{\tabcolsep}{4pt}
\renewcommand{\arraystretch}{1.2}
\begin{longtable}{@{}>{\raggedright\arraybackslash}p{\dimexpr 0.29\linewidth-1.0\tabcolsep\relax}>{\raggedright\arraybackslash}p{\dimexpr 0.71\linewidth-1.0\tabcolsep\relax}@{}}
\caption{Training parameters and numerical formats. FP8 weights and A24 accumulation are supplementary configurations; the main table uses BF16 weights.}\label{tab:6}\\
\toprule
Parameter & Setting \\
\midrule\endfirsthead
\multicolumn{2}{l}{\small\itshape Table \thetable\ (continued)}\\\toprule
Parameter & Setting \\
\midrule\endhead
\midrule\multicolumn{2}{r}{\small\itshape Continued on next page}\\\endfoot\bottomrule\endlastfoot
Decay $\lambda $ & 1 (no leakage during local decomposition) \\
Emission threshold $\theta $ & two-sided code floor ${2}^{{e}_{\operatorname{min}}}={2}^{-16}$; extracts nonzero powers of the input significand \\
Primary Heartbeat and $T=1$ surrogate & magnitude STE on the reconstructed sum, ${g}_{T}(x)={q}_{T}(x)/x$ for $x\ne 0$ \\
Surrogate at $x=0$ & magnitude: $0$ at full strength, $1-\alpha $ during insertion; identity: $1$ \\
Standard-mode surrogate at $T=2$ & constant 0.5 including at zero membrane potential, with cross-tick backpropagation \\
FP8 weights & E4M3FN, static power-of-two per-tensor scale \\
A24 accumulator & 24-bit signed block-floating-point persistent state \\
Optimizer & Adam with decoupled weight decay (AdamW) \citep{loshchilov2019adamw}, ${\beta }_{1}=0.9$, ${\beta }_{2}=0.95$, $\epsilon ={10}^{-8}$, weight decay ${10}^{-10}$, grad-norm clip 1.0 \\
Learning-rate schedule & linear warmup 200 updates to $5\times {10}^{-6}$, cosine decay to ${10}^{-6}$; exponential moving average (EMA) decay 0.999 \\
PoT insertion & linear ramp over the first 1,000 optimizer updates ($\alpha :0\to 1$) \\
Membrane precision & FP32 in Heartbeat main runs (BF16 membrane: separate no-fine-tuning variant) \\
Seeds & training 42, evaluation 7 \\
Fine-tuning & 5,000 optimizer updates, effective batch 256 across 4 devices \\
Forward execution & SAC per nonzero event--weight pair; floating-point power-of-two scaling through exponent addition and sign handling, followed by accumulation inside the projection \\
\end{longtable}}

During gradual insertion, Heartbeat uses the forward map $x+\alpha ({q}_{T}(x)-x)$, where $\alpha =clip(s/(1000a),0,1)$, $s$ is the microstep index, and $a$ is the gradient-accumulation depth. The effective surrogate derivative is $(1-\alpha )+\alpha {g}_{T}(x)$; at zero input it is $1-\alpha $ for the magnitude surrogate. The shared $T=1$ configuration uses the magnitude STE (Appendix~\ref{app:H}). Standard mode at $T=2$ differentiates through its membrane recurrence with a local gain of 0.5. For the local recurrence with $\lambda =1$, this gives $\partial {s}_{t}/\partial {I}_{k}=0.5\cdot {0.5}^{t-k}$ for $k\le t$, with internal inputs treated as independent. Membrane states persist across the network's internal steps and reset for each new image-and-prompt input. For a nonzero weight tensor $W$, FP8 quantization uses a static power-of-two scale with exponent $e=\lceil {\operatorname{log}}_{2}(\operatorname{max}|W|/448)\rceil $ and round-to-nearest-even conversion. Biases, normalization scales, and embeddings retain their original precision. Query--key score products and attention-weighted value products remain floating-point.

\section{$T=1$ SAC Arithmetic Accounting}\label{app:F}

This appendix reports only the operation-level arithmetic energy for one dense reference product. It excludes data transport, static random-access memory (SRAM) and register access, event decoding, sign-control logic, membrane encoding, leakage, clocking, communication, and other system-level costs.

The comparison uses one conventional FP32 multiply--accumulate (MAC) as the reference. The 4.6 pJ MAC cost is decomposed into a 3.7 pJ FP32 multiplication and a 0.9 pJ FP32 accumulation.

For the PoT-LIF $T=1$ path, each input produces at most one signed power-of-two event. Using the reported 60.01\% firing rate as the effective event--weight activity fraction gives an average of 0.6001 SAC operations per reference product, with zero-event contributions skipped. Each event requires one exponent-scaling operation and one FP32 accumulation. Exponent scaling is assigned an illustrative cost of 0.1 pJ per operation, using an integer-addition cost as a proxy.

{\small
\setlength{\tabcolsep}{4pt}
\renewcommand{\arraystretch}{1.2}
\begin{longtable}{@{}>{\raggedright\arraybackslash}p{\dimexpr 0.24\linewidth-1.5\tabcolsep\relax}>{\raggedright\arraybackslash}p{\dimexpr 0.3\linewidth-1.5\tabcolsep\relax}>{\raggedright\arraybackslash}p{\dimexpr 0.3\linewidth-1.5\tabcolsep\relax}>{\raggedright\arraybackslash}p{\dimexpr 0.16\linewidth-1.5\tabcolsep\relax}@{}}
\caption{Average arithmetic energy per dense reference product with matched FP32 accumulation; the $T=1$ PoT-LIF row uses a 60.01\% firing rate. All other power terms are excluded.}\label{tab:7}\\
\toprule
Path & Multiplication / scaling (pJ) & FP32 accumulation (pJ) & Total (pJ) \\
\midrule\endfirsthead
\multicolumn{4}{l}{\small\itshape Table \thetable\ (continued)}\\\toprule
Path & Multiplication / scaling (pJ) & FP32 accumulation (pJ) & Total (pJ) \\
\midrule\endhead
\midrule\multicolumn{4}{r}{\small\itshape Continued on next page}\\\endfoot\bottomrule\endlastfoot
FP32 MAC & $1\times3.7=3.7$ & $1\times0.9=0.9$ & 4.6 \\
PoT-LIF $T=1$ & $0.6001\times0.1$ $=0.06001$ & $0.6001\times0.9$ $=0.54009$ & 0.6001 \\
\end{longtable}}

Thus, under this matched arithmetic model, PoT-LIF $T=1$ averages 0.6001~pJ per reference product and the FP32 MAC reference uses 4.6~pJ. The energy ratio is $4.6/0.6001\approx7.67$, and the arithmetic-energy reduction is $1-0.6001/4.6\approx86.95\%$.

These values are arithmetic-energy estimates rather than hardware measurements. At an operation rate of $f$ reference products per second, the corresponding modeled arithmetic powers are $4.6f$~pW for the FP32 MAC path and $0.6001f$~pW for the PoT-LIF $T=1$ path. Full-system power is outside the scope of this accounting.

\section{McNemar Pairing Statistics}\label{app:G}

All 25 comparisons use 2,000 paired episodes. For models A and B, $b$ counts A-success/B-failure pairs and $c$ counts A-failure/B-success pairs. The exact two-sided McNemar value \citep{mcnemar1947} is $p=\operatorname{min}(1,2\sum _{k=0}^{\operatorname{min}(b,c)}\binom{b+c}{k}{2}^{-(b+c)})$. Table~\ref{tab:8} groups the comparisons as follows: F1 contains the six dense-control rows, F2 the three matched-budget continuation rows, and E the 16 exploratory rows. Holm correction \citep{holm1979} is applied separately within F1 and F2. F1 includes the four main event configurations and two supplementary configurations: Heartbeat trained with the identity STE and the FP8/A24 variant. The two STE configurations are fine-tuned separately from the same initial checkpoint and evaluated using their respective final checkpoints. F2 compares standard mode at $T=1$ and Heartbeat at $T=2$ and $T=4$ against the matched-budget continuous control to account for the additional fine-tuning updates. Archived FP8/FP32 entries are exploratory; their cross-process reproducibility remains unverified. None of the four main event configurations differs significantly from the original continuous-activation control after Holm correction at the 0.05 level; this does not establish statistical equivalence.

\section{Expressive range of the event code}\label{app:H}

We separately examine the representational capacity of power-of-two increments using trainable thresholds whose positions are parameterized separately from the emitted increments. Each ordered threshold sets where an increment is triggered, while the increment is specified by its own signed power-of-two code; its magnitude is not constrained to equal the threshold. This construction is distinct from the single-threshold recurrence used in the VLA projections; its formal definition is given in Appendix~\ref{app:B}. With 32 thresholds and at most four signed power-of-two terms per increment, it approximates a clipped rectified linear unit (ReLU) scaled by 1/4, sigmoid, and hyperbolic tangent (tanh) with root mean square error (RMSE) values of 0.020, 0.033, and 0.045, respectively (Fig.~\ref{fig:3}). Threshold placement controls resolution along the input axis, while the available power-of-two terms constrain the output increments. These function-fitting results illustrate the expressive range of the event code. The ability of trainable thresholds to shape nonlinear responses motivates further investigation of this direction.

\begin{figure}[H]\centering
	\includegraphics[width=1.0\linewidth]{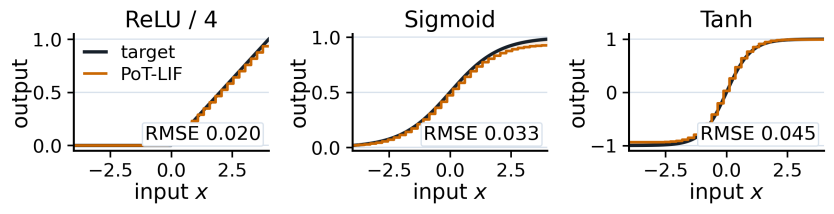}
	\caption{Function approximation with trainable threshold positions and separately parameterized power-of-two increments (Appendix~\ref{app:B}). Orange curves approximate the target functions shown in dark lines using 32 ordered thresholds and at most four signed power-of-two terms per increment. The respective RMSE values are 0.020, 0.033, and 0.045.}\label{fig:3}
\end{figure}

\clearpage
{\footnotesize
\setlength{\tabcolsep}{4pt}
\renewcommand{\arraystretch}{1.2}
\begin{longtable}{@{}>{\raggedright\arraybackslash}p{\dimexpr 0.065\linewidth-1.7777777777777777\tabcolsep\relax}>{\raggedright\arraybackslash}p{\dimexpr 0.35\linewidth-1.7777777777777777\tabcolsep\relax}>{\raggedright\arraybackslash}p{\dimexpr 0.085\linewidth-1.7777777777777777\tabcolsep\relax}>{\raggedright\arraybackslash}p{\dimexpr 0.085\linewidth-1.7777777777777777\tabcolsep\relax}>{\raggedright\arraybackslash}p{\dimexpr 0.075\linewidth-1.7777777777777777\tabcolsep\relax}>{\raggedright\arraybackslash}p{\dimexpr 0.055\linewidth-1.7777777777777777\tabcolsep\relax}>{\raggedright\arraybackslash}p{\dimexpr 0.055\linewidth-1.7777777777777777\tabcolsep\relax}>{\raggedright\arraybackslash}p{\dimexpr 0.11\linewidth-1.7777777777777777\tabcolsep\relax}>{\raggedright\arraybackslash}p{\dimexpr 0.12\linewidth-1.7777777777777777\tabcolsep\relax}@{}}
\caption{Paired statistical comparisons. A and B are success rates (\%); $\Delta =B-A$ is in percentage points. For F1 and the first nine E rows, A is the 96.70\% continuous control; for F2, A is the 96.25\% matched-budget control. Other E rows name their reference configuration. $p$ is the unadjusted exact value; $p_{\mathrm{Holm}}$ is reported only for F1 and F2. The two STE rows correspond to separately fine-tuned final checkpoints. ``BF16 membrane'' distinguishes the precision sweeps from the main configurations with FP32 membrane state.}\label{tab:8}\\
\toprule
Fam. & Configuration B / reference A & A (\%) & B (\%) & $\Delta $ & $b$ & $c$ & $p$ & $p_{\mathrm{Holm}}$ \\
\midrule\endfirsthead
\multicolumn{9}{l}{\small\itshape Table \thetable\ (continued)}\\\toprule
Fam. & Configuration B / reference A & A (\%) & B (\%) & $\Delta $ & $b$ & $c$ & $p$ & $p_{\mathrm{Holm}}$ \\
\midrule\endhead
\midrule\multicolumn{9}{r}{\small\itshape Continued on next page}\\\endfoot\bottomrule\endlastfoot
F1 & Heartbeat $T=4$, identity STE & 96.70 & 96.90 & +0.20 & 38 & 42 & 0.738 & 1.000 \\
F1 & Heartbeat $T=4$, magnitude STE & 96.70 & 96.90 & +0.20 & 38 & 42 & 0.738 & 1.000 \\
E & Heartbeat $T=4$ FP8/FP32 (archived) & 96.70 & 97.00 & +0.30 & 40 & 46 & 0.590 & --- \\
F1 & Heartbeat $T=4$ FP8/A24 & 96.70 & 97.50 & +0.80 & 38 & 54 & 0.117 & 0.704 \\
F1 & Standard mode $T=2$ & 96.70 & 96.35 & $-$0.35 & 62 & 55 & 0.579 & 1.000 \\
F1 & Heartbeat $T=2$ main & 96.70 & 97.25 & +0.55 & 40 & 51 & 0.294 & 1.000 \\
E & Heartbeat $T=2$ FP8/FP32 (archived) & 96.70 & 96.90 & +0.20 & 43 & 47 & 0.752 & --- \\
F1 & Standard mode $T=1$ BF16 & 96.70 & 96.90 & +0.20 & 44 & 48 & 0.755 & 1.000 \\
E & Heartbeat $T=2$ BF16 membrane, $\theta ={2}^{-16}$ & 96.70 & 97.15 & +0.45 & 38 & 47 & 0.386 & --- \\
E & Heartbeat $T=4$ BF16 membrane, $\theta ={2}^{-16}$ & 96.70 & 97.20 & +0.50 & 33 & 43 & 0.302 & --- \\
E & Heartbeat $T=2$ BF16 membrane, $\theta ={2}^{-8}$ & 96.70 & 97.05 & +0.35 & 43 & 50 & 0.534 & --- \\
E & Heartbeat $T=4$ BF16 membrane, $\theta ={2}^{-8}$ & 96.70 & 97.20 & +0.50 & 35 & 45 & 0.314 & --- \\
E & $T=1$ FP8 E4M3FN, $\theta ={2}^{-16}$ & 96.70 & 97.25 & +0.55 & 40 & 51 & 0.294 & --- \\
E & $T=1$ FP8 E4M3FN, $\theta ={2}^{-8}$ & 96.70 & 96.35 & $-$0.35 & 57 & 50 & 0.562 & --- \\
E & $T=1$ FP8 E4M3FN, $\theta ={2}^{-4}$ & 96.70 & 97.05 & +0.35 & 47 & 54 & 0.551 & --- \\
F2 & Standard mode $T=1$ BF16 & 96.25 & 96.90 & +0.65 & 44 & 57 & 0.232 & 0.284 \\
F2 & Heartbeat $T=2$ main & 96.25 & 97.25 & +1.00 & 36 & 56 & 0.047 & 0.141 \\
F2 & Heartbeat $T=4$ magnitude STE & 96.25 & 96.90 & +0.65 & 27 & 40 & 0.142 & 0.284 \\
E & $T=1$ FP8 / BF16 weights, $\theta ={2}^{-16}$ & 96.90 & 97.25 & +0.35 & 40 & 47 & 0.520 & --- \\
E & $T=1$ FP8 / BF16 weights, $\theta ={2}^{-8}$ & 96.90 & 96.35 & $-$0.55 & 52 & 41 & 0.300 & --- \\
E & $T=1$ FP8 / BF16 weights, $\theta ={2}^{-4}$ & 96.90 & 97.05 & +0.15 & 40 & 43 & 0.826 & --- \\
E & $T=2$ BF16 membrane / main, $\theta ={2}^{-16}$ & 97.25 & 97.15 & $-$0.10 & 40 & 38 & 0.910 & --- \\
E & $T=4$ BF16 membrane / main, $\theta ={2}^{-16}$ & 96.90 & 97.20 & +0.30 & 18 & 24 & 0.441 & --- \\
E & $T=2$ BF16 membrane / main, $\theta ={2}^{-8}$ & 97.25 & 97.05 & $-$0.20 & 39 & 35 & 0.728 & --- \\
E & $T=4$ BF16 membrane / main, $\theta ={2}^{-8}$ & 96.90 & 97.20 & +0.30 & 22 & 28 & 0.480 & --- \\
\end{longtable}}

\section{Threshold and Precision Sweeps}\label{app:I}

The two sweeps below reuse fixed checkpoints without further fine-tuning. Each configuration is evaluated on 500 episodes per suite. Event statistics are computed for each precision path separately.

Standard mode at $T=1$ with BF16 weights is the shared single-step configuration reported in Table~\ref{tab:1}. Starting from the continuous-activation checkpoint, it uses the magnitude STE and 5,000 optimizer updates at effective batch size 256. Success rates are 99.2\% on Spatial, 98.4\% on Object, 97.6\% on Goal, and 92.4\% on Long-Horizon, giving 96.90\% overall (1,938/2,000).

{\small
\setlength{\tabcolsep}{4pt}
\renewcommand{\arraystretch}{1.2}
\begin{longtable}{@{}>{\raggedright\arraybackslash}p{\dimexpr 0.06\linewidth-1.75\tabcolsep\relax}>{\raggedright\arraybackslash}p{\dimexpr 0.11\linewidth-1.75\tabcolsep\relax}>{\raggedright\arraybackslash}p{\dimexpr 0.14\linewidth-1.75\tabcolsep\relax}>{\raggedright\arraybackslash}p{\dimexpr 0.14\linewidth-1.75\tabcolsep\relax}>{\raggedright\arraybackslash}p{\dimexpr 0.13\linewidth-1.75\tabcolsep\relax}>{\raggedright\arraybackslash}p{\dimexpr 0.14\linewidth-1.75\tabcolsep\relax}>{\raggedright\arraybackslash}p{\dimexpr 0.15\linewidth-1.75\tabcolsep\relax}>{\raggedright\arraybackslash}p{\dimexpr 0.13\linewidth-1.75\tabcolsep\relax}@{}}
\caption{Heartbeat with BF16 membrane state and explicit thresholds, using fixed checkpoints and BF16 weights without further fine-tuning. The implementation's four-bit threshold-selection index maps $c=15$ to $\theta ={2}^{-16}$ and $c=7$ to $\theta ={2}^{-8}$; it is not the event-exponent field. $\Delta $ is relative to the main Heartbeat configuration at the same $T$, which uses FP32 membrane state and $\theta ={2}^{-16}$. Rates are percentages; differences are percentage points.}\label{tab:9}\\
\toprule
$T$ & $\theta $ & Spatial & Object & Goal & Long-H. & Overall & $\Delta $ \\
\midrule\endfirsthead
\multicolumn{8}{l}{\small\itshape Table \thetable\ (continued)}\\\toprule
$T$ & $\theta $ & Spatial & Object & Goal & Long-H. & Overall & $\Delta $ \\
\midrule\endhead
\midrule\multicolumn{8}{r}{\small\itshape Continued on next page}\\\endfoot\bottomrule\endlastfoot
2 & ${2}^{-16}$ & 99.4 & 98.8 & 97.6 & 92.8 & 97.15 & $-$0.10 \\
2 & ${2}^{-8}$ & 98.2 & 98.6 & 98.4 & 93.0 & 97.05 & $-$0.20 \\
4 & ${2}^{-16}$ & 99.4 & 97.6 & 98.8 & 93.0 & 97.20 & $+$0.30 \\
4 & ${2}^{-8}$ & 98.8 & 98.2 & 99.0 & 92.8 & 97.20 & $+$0.30 \\
\end{longtable}}

{\small
\setlength{\tabcolsep}{4pt}
\renewcommand{\arraystretch}{1.2}
\begin{longtable}{@{}>{\raggedright\arraybackslash}p{\dimexpr 0.14\linewidth-1.7142857142857142\tabcolsep\relax}>{\raggedright\arraybackslash}p{\dimexpr 0.14\linewidth-1.7142857142857142\tabcolsep\relax}>{\raggedright\arraybackslash}p{\dimexpr 0.14\linewidth-1.7142857142857142\tabcolsep\relax}>{\raggedright\arraybackslash}p{\dimexpr 0.13\linewidth-1.7142857142857142\tabcolsep\relax}>{\raggedright\arraybackslash}p{\dimexpr 0.14\linewidth-1.7142857142857142\tabcolsep\relax}>{\raggedright\arraybackslash}p{\dimexpr 0.16\linewidth-1.7142857142857142\tabcolsep\relax}>{\raggedright\arraybackslash}p{\dimexpr 0.15\linewidth-1.7142857142857142\tabcolsep\relax}@{}}
\caption{Standard mode at $T=1$ with FP8 E4M3FN weights and three explicit thresholds, using the final $T=1$ checkpoint without further fine-tuning and a static power-of-two scale per weight tensor. $\Delta $ is relative to the 96.90\% BF16-weight $T=1$ result. Rates are percentages; differences are percentage points.}\label{tab:10}\\
\toprule
$\theta $ & Spatial & Object & Goal & Long-H. & Overall & $\Delta $ \\
\midrule\endfirsthead
\multicolumn{7}{l}{\small\itshape Table \thetable\ (continued)}\\\toprule
$\theta $ & Spatial & Object & Goal & Long-H. & Overall & $\Delta $ \\
\midrule\endhead
\midrule\multicolumn{7}{r}{\small\itshape Continued on next page}\\\endfoot\bottomrule\endlastfoot
${2}^{-16}$ & 98.8 & 98.6 & 98.0 & 93.6 & 97.25 & $+$0.35 \\
${2}^{-8}$ & 98.2 & 98.0 & 97.6 & 91.6 & 96.35 & $-$0.55 \\
${2}^{-4}$ & 98.6 & 97.6 & 97.8 & 94.2 & 97.05 & $+$0.15 \\
\end{longtable}}

The BF16-membrane sweep changes overall success by $-$0.20 to +0.30 percentage points relative to the corresponding main Heartbeat configuration. The FP8-weight sweep changes success by $-$0.55 to +0.35 points relative to the BF16-weight single-step configuration. These fixed-checkpoint results describe sensitivity to the tested thresholds and numerical formats. The sweep comparisons are exploratory and fall outside the two corrected families in Appendix~\ref{app:G}.

\clearpage

\setlength{\bibsep}{5pt}
\interlinepenalty=10000
\bibliographystyle{unsrtnat}

\bibliography{references}

\end{document}